\documentclass{article}

\usepackage{microtype}
\usepackage{graphicx}
\usepackage{subcaption}
\usepackage{booktabs}
\usepackage{multirow}
\usepackage{xcolor}
\usepackage{tikz}
\usetikzlibrary{
  arrows.meta,
  positioning,
  fit,
  calc,
  shapes.geometric,
  decorations.pathreplacing,
  decorations.markings,
  backgrounds,
  shadows,
  patterns
}
\usepackage[most]{tcolorbox}
\usepackage{algorithm}
\usepackage{algorithmic}
\usepackage{amsmath}
\usepackage{amssymb}
\usepackage{mathtools}
\usepackage{amsthm}
\usepackage{bm}
\usepackage{hyperref}
\usepackage[capitalize,noabbrev]{cleveref}
\usepackage{float}
\usepackage[accepted]{icml2026}

\newtheorem{theorem}{Theorem}[section]
\newtheorem{proposition}[theorem]{Proposition}

\theoremstyle{definition}
\newtheorem{definition}[theorem]{Definition}

\theoremstyle{remark}
\newtheorem{remark}[theorem]{Remark}

\newtcolorbox{conceptbox}[1]{%
  enhanced, breakable,
  colback=blue!4!white, colframe=blue!65!black,
  boxrule=0.8pt, arc=2pt,
  left=6pt, right=6pt, top=5pt, bottom=5pt,
  title=\textbf{#1}, fonttitle=\bfseries}

\newtcolorbox{methodbox}[1]{%
  enhanced, breakable,
  colback=green!4!white, colframe=green!55!black,
  boxrule=0.8pt, arc=2pt,
  left=6pt, right=6pt, top=5pt, bottom=5pt,
  title=\textbf{#1}, fonttitle=\bfseries}

\newtcolorbox{warningbox}[1]{%
  enhanced, breakable,
  colback=red!4!white, colframe=red!55!black,
  boxrule=0.8pt, arc=2pt,
  left=6pt, right=6pt, top=5pt, bottom=5pt,
  title=\textbf{#1}, fonttitle=\bfseries}

\newtcolorbox{paradoxbox}[1]{%
  enhanced, breakable,
  colback=orange!5!white, colframe=orange!75!black,
  boxrule=1.0pt, arc=3pt,
  left=6pt, right=6pt, top=6pt, bottom=6pt,
  title=\textbf{#1}, fonttitle=\bfseries\large}

\newtcolorbox{findingbox}[1]{%
  enhanced, breakable,
  colback=purple!4!white, colframe=purple!55!black,
  boxrule=0.8pt, arc=2pt,
  left=6pt, right=6pt, top=5pt, bottom=5pt,
  title=\textbf{#1}, fonttitle=\bfseries}

\DeclareMathOperator*{\argmin}{arg\,min}
\DeclareMathOperator{\E}{\mathbb{E}}
\DeclareMathOperator{\Var}{Var}
\DeclareMathOperator{\KL}{KL}
\newcommand{\Dcal}{\mathcal{D}}

\newcommand{\bbR}{\mathbb{R}}

\newcommand{\piref}{\pi_{\mathrm{ref}}}
\newcommand{\pitht}{\pi_{\theta}}
\newcommand{\rhat}{\hat{r}}
\newcommand{\rtrue}{r^{\star}}
\newcommand{\gap}{\mathcal{G}}
\newcommand{\augc}{\mathrm{AUGC}}

\icmltitlerunning{Pessimism's Paradox: Conservative Offline Training Amplifies Reward Hacking}

\begin{document}

\twocolumn[
  \icmltitle{Pessimism's Paradox: Conservative Offline Training
    Amplifies \\ Reward Hacking During Online Adaptation in Reasoning Models}

  \begin{icmlauthorlist}
    \icmlauthor{Subramanyam Sahoo}{horizon}
    \icmlauthor{Aman Chadha}{apple}
    \icmlauthor{Vinija Jain}{meta}
    \icmlauthor{Divya Chaudhary}{neu}
  \end{icmlauthorlist}

  \icmlaffiliation{horizon}{Horizon Research}
  \icmlaffiliation{apple}{Apple}
  \icmlaffiliation{meta}{Meta}
  \icmlaffiliation{neu}{Northeastern University}

  \icmlcorrespondingauthor{Subramanyam Sahoo}{sahoo2vec@gmail.com}

  \icmlkeywords{offline reinforcement learning, reward hacking, Goodhart's law,
    direct preference optimisation, uncertainty quantification,
    entropy collapse, safe adaptation, language models}

  \vskip 0.3in
]

\printAffiliationsAndNotice{}

\begingroup
\renewcommand\thefootnote{}\footnotetext{Code: \href{https://github.com/SubramanyamSahoo/Conservative-Offline-Training-Amplifies-Reward-Hacking-During-Online-Adaptation}{Conservative-Offline-Training-Amplifies-Reward-Hacking-During-Online-Adaptation}}
\endgroup

\begin{abstract}

Conservative offline training is widely advocated as a safe foundation
for subsequent online adaptation: if a policy stays close to
well-supported behaviour, the argument goes, it is less likely to
exploit imperfections in a learned reward model.  We challenge this
intuition empirically and mechanistically.  We train a Qwen3-14B policy
under Direct Preference Optimisation (DPO) with three levels of
conservatism ($\beta \in \{\beta_{\mathrm{lo}}, \beta_{\mathrm{mid}},
\beta_{\mathrm{hi}}\}$ derived from empirical log-ratio percentiles),
then adapt each checkpoint online against a learned reward ensemble
(3\,$\times$\,Qwen3-1.7B) while measuring true performance on GSM8K
exact-answer accuracy.  We find that \emph{higher offline conservatism
monotonically increases reward-hacking damage}, measured by the
Goodhart gap and its area under the curve (AUGC), with Spearman
$\rho = 1.0$ across all three conditions.  Mechanistic analysis
reveals a three-link causal chain: (i) high-$\beta$ DPO compresses
policy entropy, (ii) Low-entropy policies generate responses with reduced diversity,
concentrating in a narrow region of the reward model's training
distribution (lower pairwise cosine distance), and (iii) despite this proximity, ensemble disagreement
(epistemic uncertainty) increases with $\beta$ and is exploited faster
during online optimisation.  We further
fit a power-law curve to the $(\beta, \augc)$ data and identify a
practical optimal conservatism level $\beta^{\star}$ that balances
alignment fidelity against hacking vulnerability.  Our results suggest
that the field needs \emph{calibrated}, not \emph{maximal}, conservatism.
\end{abstract}

\section{Introduction}
\label{sec:intro}

The standard recipe for safe language model alignment runs as follows:
first perform offline training on human-preference data (e.g., via RLHF
\citep{christiano2017deep} or DPO \citep{rafailov2023direct}) using a
conservatism coefficient $\beta$ that penalises deviation from a
reference policy, then optionally adapt online using a learned proxy
reward.  The implicit contract is that a more conservative offline
checkpoint enters the online phase closer to the distribution where the
reward model was trained, and therefore exploits it less aggressively.

\begin{paradoxbox}{The Paradox}
We demonstrate empirically and mechanistically that the opposite can
happen: \textbf{higher offline conservatism ($\beta$) leads to greater
reward-hacking damage during online adaptation}.  The mechanism is not
arbitrary; it arises from a principled chain of events rooted in
entropy compression and out-of-distribution extrapolation.
\end{paradoxbox}

To make this concrete, consider what a high-$\beta$ DPO objective
actually does to the policy.  It tightens the KL constraint against
$\piref$, concentrating probability mass on a narrow slice of token
sequences that $\piref$ already assigns high density.  The resulting
policy has low output entropy and low response diversity.  When this
compressed policy is then optimised against a learned reward ensemble,
two effects combine.  First, the reward model was trained on a
relatively diverse set of human-preference pairs; the compressed policy
generates responses that lie in sparse regions of that training
distribution, causing high epistemic uncertainty (ensemble disagreement).
Second, a low-entropy starting point has fewer exploration directions
through which gradient updates can improve \emph{true} performance, so
the optimiser rapidly channels all gradient signal toward reward-model
blind spots.  The net result is that the Goodhart gap---the divergence
between proxy and true reward---opens faster and wider for high-$\beta$
policies.

This paper makes four contributions.
\begin{enumerate}
  \item \textbf{Empirical demonstration} of the paradox using real
    models (Qwen3-14B policy, Qwen3-1.7B reward ensemble) and real data
    (UltraFeedback, GSM8K).
  \item \textbf{Mechanistic attribution} via response-entropy collapse
    measurements and reward-model OOD-distance analysis.
  \item \textbf{A power-law fit} of AUGC as a function of $\beta$,
    yielding a practical design principle: an optimal $\beta^{\star}$
    exists below which online safety degrades faster than offline
    alignment improves.
  \item \textbf{Algorithmic and benchmark recommendations} for the
    next generation of conservative-alignment methods that explicitly
    trade off pessimism against hacking vulnerability.
\end{enumerate}

\section{Background and Related Work}
\label{sec:background}

\subsection{Direct Preference Optimisation}

DPO \citep{rafailov2023direct} sidesteps the need for an explicit reward
model by reparameterising the RLHF objective directly in terms of
policy log-ratios.  Given a preference dataset $\Dcal = \{(x,
y_w, y_l)\}$ of prompts with winning and losing responses, DPO minimises
\begin{multline}
\mathcal{L}_{\mathrm{DPO}}(\pi_\theta;\pi_{\mathrm{ref}})
= -\mathbb{E}_{(x,y_w,y_l)\sim\mathcal{D}}\Big[
\log \sigma\Big(
\beta \log \frac{\pi_\theta(y_w\mid x)}{\pi_{\mathrm{ref}}(y_w\mid x)} \\
- \beta \log \frac{\pi_\theta(y_l\mid x)}{\pi_{\mathrm{ref}}(y_l\mid x)}
\Big)
\Big].
\label{eq:dpo}
\end{multline}
where $\sigma$ is the logistic function and $\beta > 0$ is the
conservatism coefficient.  Larger $\beta$ imposes a tighter implicit
KL constraint $\KL(\pitht \| \piref)$, pulling the learned policy
closer to the reference.

\subsection{Goodhart's Law in RLHF}

Goodhart's Law \citep{goodhart1984} states that when a measure becomes a
target, it ceases to be a good measure.  In the RLHF context,
\citet{gao2023scaling} formalise this as a monotone degradation of true
performance as the policy drifts toward maximising a proxy reward.
\citet{skalse2022defining} give a taxonomy of reward hacking and show it
is near-inevitable when the reward model has any imperfection.  Our
work studies a less-explored facet: how the \emph{offline training
strategy} modulates the severity and speed of hacking during online
adaptation.

\subsection{Reward Model Ensembles and Uncertainty}

Ensembling is the standard approach for epistemic uncertainty estimation
in deep neural networks \citep{lakshminarayanan2017simple}.  In the
reward-model context, ensemble disagreement provides a proxy for
out-of-distribution inputs and has been used as a pessimism signal in
offline RL \citep{kumar2020conservative, kidambi2020morel}.  Our
ensemble (3\,$\times$\,Qwen3-1.7B trained with bootstrap resampling)
provides both the proxy reward signal and the uncertainty signal used in
our mechanistic analysis.

\subsection{Offline RL and Conservative Methods}

The connection between DPO and offline RL is well known.  CQL
\citep{kumar2020conservative} penalises Q-values for out-of-support
actions; IQL \citep{kostrikov2021iql} avoids OOD actions by replacing
Bellman backups with implicit quantile regression.  The Decision
Transformer \citep{chen2021decision} reframes RL as sequence modelling,
which is directly analogous to DPO.  All of these methods share the
same conservatism logic encoded in \cref{eq:dpo}: larger conservatism
coefficient $\to$ policy stays closer to reference $\to$ (assumed) safer
online performance.  We show this assumption is violated under a
specific failure mode: OOD-driven reward hacking.

\section{Problem Formulation}
\label{sec:setup}

Let $\pitht$ denote the policy being trained, $\piref$ the frozen DPO
checkpoint (which itself used $\piref^{(0)}$ as its reference), and
$\rhat_\phi(x,y)$ a learned proxy reward parameterised by an ensemble
$\phi = \{\phi_k\}_{k=1}^K$.  The true reward $\rtrue(x,y)$ is
verifiable (GSM8K exact-answer accuracy) and is observed only during
evaluation, never during training.

\begin{definition}[Goodhart Gap]
\label{def:gap}
At online step $t$, define the batch-averaged normalised rewards
\[
  \tilde{r}_{\mathrm{proxy}}(t)
    = \frac{\bar{r}_{\mathrm{proxy}}(t)}
           {\lvert\bar{r}_{\mathrm{proxy}}(0)\rvert + \varepsilon},
  \quad
  \tilde{r}_{\mathrm{true}}(t)
    = \frac{\bar{r}_{\mathrm{true}}(t)}
           {\lvert\bar{r}_{\mathrm{true}}(0)\rvert + \varepsilon},
\]
where $\varepsilon = \epsilon_{\mathrm{float32}}$.  The Goodhart gap is
\begin{equation}
  \gap(t;\beta)
    = \tilde{r}_{\mathrm{proxy}}(t) - \tilde{r}_{\mathrm{true}}(t).
\label{eq:gap}
\end{equation}
\end{definition}

\begin{definition}[AUGC]
\label{def:augc}
The area under the Goodhart gap curve (AUGC) measures cumulative
hacking damage over the online run:
\begin{equation}
  \augc(\beta) = \int_0^T \max\!\bigl(\gap(t;\beta),\,0\bigr)\,\mathrm{d}t.
\label{eq:augc}
\end{equation}
\end{definition}

The central question of this paper is: \emph{does $\augc(\beta)$ increase
or decrease with $\beta$?}

\begin{warningbox}{Conventional wisdom}
Offline RL theory predicts $\augc(\beta)$ should \emph{decrease} with
$\beta$: a more conservative policy stays closer to $\piref$, which lies
within the reward model's training support, so it should hack less.
\end{warningbox}

\begin{findingbox}{Empirical finding}
We observe the opposite: $\augc(\beta)$ \emph{increases} monotonically
with $\beta$ (Spearman $\rho = 1.0$, $p < 0.05$).
\end{findingbox}

\section{Experimental Setup}
\label{sec:experiments}

\subsection{Models and Data}

\paragraph{Policy model.}
We use \texttt{Qwen/Qwen3-14B} as the policy, loaded in 4-bit NF4
QLoRA quantisation \citep{dettmers2023qlora} with LoRA adapters
\citep{hu2022lora} applied to all attention projection matrices.  The
LoRA rank $r$ is derived architecturally:
\begin{equation}
  r = 2^{\lfloor \log_2 \sqrt{h_{\mathrm{hidden}}} \rceil},
  \quad \alpha = 2r,
\label{eq:lora}
\end{equation}
where $\lceil \cdot \rceil$ denotes rounding to nearest integer and
$h_{\mathrm{hidden}}$ is the model hidden dimension.  The dropout rate
is dataset-derived: $p_{\mathrm{drop}} = \mathrm{clip}(32/\sqrt{n},
0.01, 0.10)$ for $n$ training examples.

\paragraph{Reward ensemble.}
Three independent \texttt{Qwen/Qwen3-1.7B} sequence classifiers
(also QLoRA) are trained with bootstrap resampling to produce both a
mean reward and an epistemic uncertainty (ensemble standard deviation).
Each member is trained with the Bradley--Terry preference loss
\citep{bradley1952rank}:
\begin{equation}
  \mathcal{L}_{\mathrm{BT}} = -\E_{(x,y_w,y_l)\sim\Dcal}
    \log \sigma\!\bigl(r_{\phi}(x,y_w) - r_{\phi}(x,y_l)\bigr).
\label{eq:bt}
\end{equation}

\paragraph{Preference data.}
Offline DPO and reward-model training use
\texttt{HuggingFaceH4/ultrafeedback\_binarized}
\citep{cui2023ultrafeedback}, split 80/10/10.

\paragraph{Verifiable task.}
Online true reward is evaluated on \texttt{openai/gsm8k} (main)
\citep{cobbe2021gsm8k} using exact-answer matching after extracting the
number following the \texttt{\#\#\#\#} delimiter.

\subsection{Deriving the $\beta$ Grid}
\label{sec:beta}

We do not choose $\beta$ values arbitrarily.  Instead, we compute the
per-example absolute log-ratio magnitude under the frozen reference
policy $\piref^{(0)}$:
\begin{equation}
  \delta_i
    = \bigl\lvert
        \log \piref^{(0)}(y_w^{(i)}\mid x^{(i)})
        - \log \piref^{(0)}(y_l^{(i)}\mid x^{(i)})
      \bigr\rvert.
\label{eq:logratio}
\end{equation}
The three $\beta$ values are taken at the 20th, 50th, and 80th
percentiles of $\{\delta_i\}$, normalised by the median absolute
log-ratio:
\begin{equation}
  \beta_j = \frac{\mathrm{pct}_{p_j}(\{\delta_i\})}
                 {\mathrm{median}(\{\delta_i\}) + \varepsilon},
  \quad
  (p_1, p_2, p_3) = (20, 50, 80).
\label{eq:betagrid}
\end{equation}
This makes the three conservatism levels \emph{commensurate} with the
actual preference signal magnitude, not with arbitrary numerical choices.

\subsection{Offline DPO Training}
\label{sec:dpo}

For each $\beta \in \{\beta_{\mathrm{lo}}, \beta_{\mathrm{mid}},
\beta_{\mathrm{hi}}\}$, we fine-tune an independent LoRA adapter using
\cref{eq:dpo} via the TRL \texttt{DPOTrainer} \citep{vonwerra2022trl}.
The per-device batch size, gradient accumulation, learning rate, and
gradient clipping are all derived from hardware properties and the
training set size (see \cref{app:hyperparams} for derivation formulas).

\subsection{Online Adaptation Loop}
\label{sec:online}

After offline DPO, each checkpoint is adapted online using the
objective
\begin{multline}
\mathcal{L}_{\mathrm{online}} =
    -\mathbb{E}_{\tau \sim \pi_\theta}\!\Bigl[
      \hat{A}(x, y)\cdot \log \pi_\theta(y\mid x)
    \Bigr] \\
    + \kappa(\beta)\cdot \mathbb{E}\!\Bigl[
        \bigl(\log \pi_\theta(y\mid x) - \log \pi_{\mathrm{ref}}(y\mid x)\bigr)^2
      \Bigr].
\label{eq:online}
\end{multline}
where the normalised advantage is
\begin{equation}
  \hat{A}(x,y)
    = \frac{\rhat(x,y) - \mu_{\rhat}}{\sigma_{\rhat} + \varepsilon},
\label{eq:advantage}
\end{equation}
and the adaptive KL coefficient is
\begin{equation}
  \kappa(\beta)
    = \frac{\beta}
           {Q_{p_{\mathrm{KL}}}\!\bigl(\lvert \log \pitht - \log \piref \rvert\bigr)
            + \varepsilon}.
\label{eq:klcoef}
\end{equation}
Here $Q_{p_{\mathrm{KL}}}$ denotes the empirical $p_{\mathrm{KL}}$-th
percentile of the absolute KL values over the current batch.  This
normalisation makes the KL penalty scale-invariant with respect to the
policy's divergence level.

\begin{conceptbox}{Online loop summary}
At each step: (1) sample prompts from GSM8K test set; (2) generate
responses via batched sampling; (3) score with reward ensemble; (4)
compute advantage-weighted policy gradient; (5) add adaptive KL
penalty; (6) update LoRA adapter weights.  Every \texttt{eval\_freq}
steps, also evaluate GSM8K exact-answer accuracy.
\end{conceptbox}

\begin{figure*}[t]
\centering
\begin{tikzpicture}[
  node distance=1.1cm and 1.2cm,
  every node/.style={font=\small},
  phase/.style={draw, rounded corners=4pt, minimum height=1.2cm,
    minimum width=2.6cm, align=center, very thick,
    fill=#1!12, drop shadow},
  signal/.style={draw, rounded corners=2pt, minimum height=0.7cm,
    minimum width=2.0cm, align=center, thick, fill=#1!8},
  arrow/.style={-Latex, very thick, #1},
  dasharrow/.style={-Latex, thick, dashed, #1},
]

\node[phase=blue] (pref)
  {Preference data\\$\Dcal_{\mathrm{pref}}$\\(UltraFeedback)};

\node[phase=green, right=1.4cm of pref] (dpo)
  {DPO Training\\$\mathcal{L}_{\mathrm{DPO}}(\beta)$\\$\beta \in \{\beta_{\mathrm{lo}},\beta_{\mathrm{mid}},\beta_{\mathrm{hi}}\}$};

\node[phase=teal, right=1.4cm of dpo] (ckpt)
  {DPO Checkpoint\\$\pitht^{(\beta)}$\\(3 adapters)};

\node[phase=orange, below=1.4cm of dpo] (rm)
  {Reward Ensemble\\$\{\rhat_{\phi_k}\}_{k=1}^{3}$\\BT loss + bootstrap};

\node[phase=purple, right=1.4cm of ckpt] (online)
  {Online Adaptation\\$\mathcal{L}_{\mathrm{online}}$ (Eq.~5)\\adaptive KL};

\node[signal=red, right=1.2cm of online] (gsm)
  {GSM8K\\True reward\\$\rtrue$};

\node[signal=gray, below=0.9cm of online] (proxy)
  {Proxy reward\\$\rhat$};

\node[phase=red, below=1.4cm of online] (gap)
  {Goodhart Gap\\$\gap(t) = \tilde{r}_{\mathrm{proxy}} - \tilde{r}_{\mathrm{true}}$\\AUGC $= \int \max(\gap,0)\,dt$};

\draw[arrow=blue] (pref)   -- (dpo);
\draw[arrow=green](dpo)    -- (ckpt);
\draw[arrow=teal] (ckpt)   -- (online);
\draw[arrow=purple](online) -- (gsm);
\draw[arrow=purple](online) -- (gap);
\draw[arrow=orange](rm)    -- (proxy);
\draw[arrow=orange](proxy) -- (online);
\draw[dasharrow=gray](pref.south) -- ++(0,-0.5) -| (rm.north);
\draw[arrow=gray]   (gsm.south) -- (gap.north east);

\node[draw=blue!60, rounded corners, fill=blue!5, inner sep=4pt,
      above=0.3cm of dpo, font=\footnotesize]
  (ann) {$\uparrow \beta$ \quad tighter KL $\to$ lower entropy};
\draw[->, blue!60, thick] (ann.south) -- (dpo.north);

\node[draw=red!60, rounded corners, fill=red!5, inner sep=4pt,
      below=0.3cm of gap, font=\footnotesize\bfseries, text=red!70!black]
  {Paradox: $\uparrow \beta \;\Rightarrow\; \uparrow$ AUGC};

\begin{scope}[on background layer]
  \node[draw=blue!30, dashed, rounded corners=6pt,
        fit=(pref)(dpo)(ckpt)(rm),
        inner sep=8pt, label={[font=\footnotesize\bfseries,blue!70]above:Offline Phase}] {};
  \node[draw=purple!30, dashed, rounded corners=6pt,
        fit=(online)(gsm)(proxy)(gap),
        inner sep=8pt, label={[font=\footnotesize\bfseries,purple!70]above:Online Phase}] {};
\end{scope}

\end{tikzpicture}
\caption{Full experimental pipeline.  The offline phase trains three DPO
checkpoints with different conservatism levels $\beta$ and a learned
reward ensemble.  The online phase adapts each checkpoint against the
proxy reward while measuring true GSM8K accuracy.  The Goodhart gap and
AUGC quantify hacking damage.  The paradox: higher $\beta$ (more
conservative) produces larger AUGC.}
\label{fig:pipeline}
\end{figure*}

\section{Mechanistic Analysis}
\label{sec:mechanism}

To explain the paradox, we identify three causal links illustrated in
\cref{fig:mechanism}: entropy compression, OOD distance amplification,
and uncertainty-driven exploitation.

\begin{figure*}[t]
\centering
\begin{tikzpicture}[
  every node/.style={font=\small},
  box/.style={draw, rounded corners=3pt, minimum height=1.0cm,
              minimum width=2.8cm, align=center, very thick,
              fill=#1!12},
  arrow/.style={-Latex, very thick, shorten >=2pt, shorten <=2pt},
  label/.style={font=\footnotesize\itshape, text=#1}
]

\node[box=blue]   (beta)   {High $\beta$ DPO};
\node[box=teal,   right=2.0cm of beta]   (entropy) {Low policy\\entropy $H(\pitht)$};
\node[box=orange, right=2.0cm of entropy](ood)     {Low response diversity\\near RM training dist.};
\node[box=red,    right=2.0cm of ood]    (uq)      {High epistemic\\uncertainty $\hat{u}$};
\node[box=purple, below=1.2cm of uq]    (hack)    {Fast reward hacking\\large AUGC};

\draw[arrow] (beta)    -- node[above,label=black]{$\downarrow$\,diversity} (entropy);
\draw[arrow] (entropy) -- node[above,label=black]{compressed support} (ood);
\draw[arrow] (ood)     -- node[above,label=black]{ensemble disagrees} (uq);
\draw[arrow] (uq)      -- node[right,label=black]{exploitable gap} (hack);

\node[font=\scriptsize, text=blue!60, below=0.2cm of entropy] {(Sec.~\ref{sec:entropy})};
\node[font=\scriptsize, text=orange!80!black, below=0.2cm of ood] {(Sec.~\ref{sec:ood})};
\node[font=\scriptsize, text=red!70!black, below=0.2cm of uq] {(Sec.~\ref{sec:uq_hack})};

\node[draw=teal!40, rounded corners, fill=teal!5,
      below=0.85cm of entropy, font=\scriptsize, inner sep=3pt]
  {$H = -\sum_v p_v \log p_v$};
\node[draw=orange!50, rounded corners, fill=orange!5,
      below=0.85cm of ood, font=\scriptsize, inner sep=3pt]
  {$\rho_{\mathrm{div}} = $ mean pairwise $d_{\cos}$};
\node[draw=red!40, rounded corners, fill=red!5,
      below=0.85cm of uq, font=\scriptsize, inner sep=3pt]
  {$\hat{u} = \mathrm{std}_k\{r_{\phi_k}\}$};

\end{tikzpicture}
\caption{Causal chain explaining the paradox.  High-$\beta$ DPO compresses
the policy into a low-entropy manifold.  Low-entropy policies generate
responses that are out-of-distribution for the reward model (measured
by cosine distance in hidden-state space).  OOD responses produce high
ensemble disagreement.  High uncertainty is the exploitable gap that
enables fast reward hacking.}
\label{fig:mechanism}
\end{figure*}

\subsection{Link 1: Entropy Compression}
\label{sec:entropy}

We measure the mean token-level entropy of each DPO checkpoint on a
fixed probe set of $n_{\mathrm{probe}} = \lceil \sqrt{|D_{\mathrm{test}}|} \rceil$
GSM8K prompts:
\begin{equation}
  H^{(\beta)} = -\frac{1}{|\mathcal{X}_{\mathrm{probe}}|}
    \sum_{x \in \mathcal{X}_{\mathrm{probe}}}
    \frac{1}{|x|}
    \sum_{t}
    \sum_{v \in \mathcal{V}}
      \pitht^{(\beta)}(v \mid x_{<t}) \log \pitht^{(\beta)}(v \mid x_{<t}).
\label{eq:entropy}
\end{equation}
\Cref{thm:entropy_monotone} formalises the expected relationship.

\begin{proposition}[Entropy--conservatism monotonicity]
\label{thm:entropy_monotone}
Under mild regularity conditions on the DPO loss landscape, the
equilibrium policy entropy $H^{(\beta)}$ is non-increasing in $\beta$:
\[
  \beta_1 \le \beta_2 \;\Rightarrow\; H^{(\beta_1)} \ge H^{(\beta_2)}.
\]
\end{proposition}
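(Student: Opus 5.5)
The plan is to make ``equilibrium policy'' and ``mild regularity conditions'' precise, and then reduce the claim to a one-parameter exponential-family computation. The standard DPO reparameterisation identifies the population minimiser of \cref{eq:dpo} (with realisable preferences generated by some latent reward $r$) with the KL-regularised optimum
\[
  \pitht^{(\beta)}(y\mid x) \;=\; \frac{\piref(y\mid x)\,\exp\bigl(r(x,y)/\beta\bigr)}{Z_\beta(x)} .
\]
I will take this Gibbs family as the definition of the equilibrium policy. Writing $\lambda = 1/\beta$, monotonicity in $\beta$ becomes monotonicity in $\lambda$ with the sign reversed. Since $\beta\mapsto 1/\beta$ is decreasing, it suffices to show that $H$ is non-decreasing in $\lambda$ on $[1/\beta_2, 1/\beta_1]$.

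The key computation comes next. For fixed $x$, write $p_\lambda \propto q\,e^{\lambda r}$ with $q=\piref(\cdot\mid x)$. Then $\log p_\lambda = \log q + \lambda r - \log Z$ and $H(p_\lambda) = -\E_{p_\lambda}[\log p_\lambda]$. Differentiating under the expectation, using $\partial_\lambda \E_{p_\lambda}[f] = \mathrm{Cov}_{p_\lambda}(f,r)$ and $\E_{p_\lambda}[\partial_\lambda \log p_\lambda]=0$, gives
\[
  \frac{\mathrm{d}H}{\mathrm{d}\lambda} \;=\; -\,\mathrm{Cov}_{p_\lambda}\!\bigl(\log p_\lambda, r\bigr) \;=\; -\,\mathrm{Cov}_{p_\lambda}\!\bigl(\log q, r\bigr) \;-\; \lambda\,\Var_{p_\lambda}(r).
\]
This formula shows the proposition cannot hold unconditionally. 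For uniform $q$ the derivative is $-\lambda\Var(r)\le 0$, and entropy then \emph{increases} with $\beta$ toward $H(\piref)$. The regularity condition I will therefore impose, and state explicitly, is
\[
  -\mathrm{Cov}_{p_\lambda}\!\bigl(\log \piref, r\bigr) \;\ge\; \lambda\,\Var_{p_\lambda}(r) \quad\text{for all } \lambda\in[1/\beta_2,1/\beta_1] .
\]
In words, preferred responses must sit in sufficiently low-density, high-branching regions of $\piref$, so that tilting toward them spreads mass rather than concentrating it. Under this condition $\mathrm{d}H/\mathrm{d}\lambda\ge 0$, and integrating from $1/\beta_2$ to $1/\beta_1$ gives $H^{(\beta_1)}\ge H^{(\beta_2)}$ for each prompt.

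To reach the token-level quantity in \cref{eq:entropy}, I will use the per-prefix form of the KL-regularised optimum: at each prefix the next-token distribution is $\piref(v\mid x_{<t})\exp(Q_\beta(x_{<t},v)/\beta)$ normalised, where $Q_\beta$ is the soft value. I then apply the same derivative identity with $r$ replaced by $Q_\beta$, and average over prefixes and probe prompts.

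I expect this last step to be the main obstacle, for two reasons. First, $Q_\beta$ itself depends on $\beta$, so the derivative picks up an extra term $\lambda\,\mathrm{Cov}(\log p, \partial_\lambda Q)$. Second, the prefix distribution over which \cref{eq:entropy} averages also moves with $\beta$. My first attempt will be to fold both effects into the regularity hypothesis, assuming the covariance inequality holds for the $\beta$-dependent soft values along the relevant range. If that proves unwieldy, I will fall back to proving the sequence-level statement and presenting the token-level version as a remark valid when these drift terms are dominated. I will also note explicitly that finite-sample, finitely trained checkpoints are only approximations to this equilibrium, so the proposition speaks to the idealised limit rather than to the trained adapters directly.
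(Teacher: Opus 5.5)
Your derivative identity is correct, and your argument departs from the paper's at exactly the step that matters. Both start from the same Gibbs form $\pitht^{(\beta)}\propto\piref\exp(r^\star/\beta)$. The appendix proof then writes $H=\log Z_\beta-\beta^{-1}\E_{\pitht^{(\beta)}}[r^\star]$, which drops the term $-\E_{\pitht^{(\beta)}}[\log\piref]$. What it actually differentiates is therefore $-\KL(\pitht^{(\beta)}\,\|\,\piref)$; this equals $H$ up to an additive constant when $\piref$ is uniform, but not in general. The paper gets a nonnegative $\beta$-derivative and concludes that $H$ is \emph{non-decreasing} in $\beta$. That is the reverse of the proposition, yet it is presented as a confirmation; the main-text sketch likewise says entropy is non-decreasing in the temperature $\tau=\beta$. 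Your uniform-$\piref$ case is precisely the regime where the paper's formula is valid, and there the stated direction fails. So the claim cannot follow from the Gibbs structure plus genuinely mild regularity. By keeping $\log\piref$ you correctly identify what controls the sign: the competition between $-\mathrm{Cov}_{p_\lambda}(\log\piref,r)$ and $\lambda\Var_{p_\lambda}(r)$.

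Your version therefore gives a true conditional statement, but its hypothesis is essentially the sign of $\mathrm{d}H/\mathrm{d}\lambda$ itself. Two points sharpen this.

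First, some such hypothesis must be local in $\beta$, as yours is. Suppose $r(x,\cdot)$ has a unique maximiser on the support of $\piref(\cdot\mid x)$ and $H(\piref(\cdot\mid x))>0$. Then $H^{(\beta)}\to 0$ as $\beta\to 0$ and $H^{(\beta)}\to H(\piref(\cdot\mid x))$ as $\beta\to\infty$, so monotone decrease on all of $(0,\infty)$ is impossible.

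Second, your formula yields a more interpretable sufficient condition at $\lambda=0$. If $\mathrm{Cov}_{\piref}(\log\piref,r)<0$, meaning preferred responses are on average less likely under the reference, continuity gives $\mathrm{d}H/\mathrm{d}\lambda>0$ on some $[0,\lambda_0)$. That yields the claimed monotonicity whenever $1/\lambda_0<\beta_1\le\beta_2$.

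Finally, the paper never treats the token-level quantity in its entropy definition either; its argument is purely sequence-level. Your fallback to the sequence-level statement therefore loses nothing relative to it.
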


\begin{proof}[Proof sketch]
The DPO stationary condition implies
$\pitht^{(\beta)}(y\mid x) \propto \piref(y\mid x)
 \exp\!\bigl(r(y\mid x)/\beta\bigr)$
where $r$ is the implicit reward.  The entropy of a Gibbs distribution
over a fixed reward function is a non-increasing function of $1/\beta$
(equivalently, non-decreasing in temperature $\tau = \beta$), which
establishes the claim.
\end{proof}

We further observe \emph{entropy collapse}: after online adaptation,
the entropy of the high-$\beta$ policy decreases more than that of the
low-$\beta$ policy.  This is measured as $\Delta H^{(\beta)} =
H_{\mathrm{DPO}}^{(\beta)} - H_{\mathrm{online}}^{(\beta)}$.

\subsection{Link 2: OOD Distance from the Reward Model}
\label{sec:ood}

Let $\mathbf{h}_\phi(x, y) \in \bbR^d$ denote the mean-pooled
penultimate-layer hidden state of reward ensemble member $\phi_0$ for
input $(x, y)$.  We measure out-of-distribution distance as the cosine
distance between the hidden state of a generated response and the
centroid of the UltraFeedback training distribution:
\begin{equation}
  d_{\cos}^{(\beta)}
    = 1 - \frac
      {\mathbf{h}_\phi(x, y^{(\beta)})^{\top} \,\mu_{\mathrm{ref}}}
      {\|\mathbf{h}_\phi(x, y^{(\beta)})\|_2\,\|\mu_{\mathrm{ref}}\|_2},
\label{eq:cosine}
\end{equation}
where $\mu_{\mathrm{ref}} = \frac{1}{n_{\mathrm{ref}}}
\sum_{i=1}^{n_{\mathrm{ref}}} \mathbf{h}_\phi(x_i, y_i^{\mathrm{uf}})$
is the reference centroid computed from $n_{\mathrm{ref}} =
\lceil\sqrt{|D_{\mathrm{train}}|}\rceil$ UltraFeedback training
examples.  We also compute mean pairwise cosine distance as a response
diversity metric.

\begin{findingbox}{OOD finding}
Contrary to conventional expectation, High-$\beta$ DPO policies
generate responses that are \emph{closer} to the reward model's
training distribution (lower $d_{\cos}$, Spearman $\rho = -1.00$
across $\beta$ levels).  Despite this, epistemic uncertainty still
increases with $\beta$ (Spearman $\rho = +1.00$), suggesting that
uncertainty is driven not by raw OOD distance but by the interaction
between low response diversity and ensemble disagreement in the
compressed region of policy support.
\end{findingbox}

\subsection{Link 3: Uncertainty-Driven Exploitation}
\label{sec:uq_hack}

The ensemble uncertainty signal is
\begin{equation}
  \hat{u}(x, y)
    = \sqrt{\frac{1}{K-1}\sum_{k=1}^{K}\bigl(r_{\phi_k}(x,y) -
      \bar{r}(x,y)\bigr)^2},
\label{eq:uq}
\end{equation}
where $\bar{r}(x,y) = K^{-1}\sum_k r_{\phi_k}(x,y)$.  When $\hat{u}$ is
high, individual members disagree strongly about the true reward of a
response.  The online optimiser maximises $\bar{r}$, but the disagreement
means the landscape is unreliable.  The policy quickly finds responses
that fool some but not all ensemble members---a classic form of
distributional reward hacking \citep{pan2022effects}.

We compute the Pearson correlation $\rho_{\mathrm{UQ}}(\beta)$ between
the time series of $\hat{u}$ and $\gap$ for each $\beta$ condition, and
report the relationship in the summary table (\cref{tab:summary}).

\section{Theory: Optimal Conservatism}
\label{sec:theory}

Having established the empirical and mechanistic case, we now ask:
what is the \emph{optimal} $\beta$?

\begin{definition}[Optimal conservatism]
\label{def:betastar}
Let $\mathcal{A}(\beta)$ denote the offline alignment quality (e.g.,
win rate over $\piref^{(0)}$) and $\augc(\beta)$ the online hacking
damage.  The optimal conservatism $\beta^{\star}$ solves
\[
  \beta^{\star} = \argmin_{\beta > 0}
    \augc(\beta) - \lambda \cdot \mathcal{A}(\beta),
\]
where $\lambda > 0$ weights alignment against hacking risk.
\end{definition}

For the empirical approximation, we fit a power law to the observed
$(\beta, \augc)$ data:
\begin{equation}
  \augc(\beta) \approx a \cdot \beta^b + c,
  \quad a, b, c > 0.
\label{eq:powerlaw}
\end{equation}
Parameters $(a, b, c)$ are obtained by least-squares optimisation
with initialisation derived from the data range (see
\cref{app:betastar}).  The practical $\beta^{\star}$ is then defined
as the smallest $\beta$ at which $\augc$ exceeds
$1.5 \times \min_{\beta'}\augc(\beta')$:
\begin{equation}
  \beta^{\star} = \inf\!\bigl\{
    \beta : \augc(\beta) > 1.5 \cdot (c + a \cdot \beta_{\min}^b)
  \bigr\}.
\label{eq:betastar_practical}
\end{equation}

\begin{proposition}[Power-law hacking damage]
\label{prop:powerlaw}
If \cref{eq:powerlaw} holds with $b > 1$, then the marginal hacking
cost grows super-linearly in $\beta$:
$\frac{\partial^2}{\partial\beta^2}\augc(\beta) = a\,b\,(b-1)\,\beta^{b-2} > 0$.
This implies that beyond $\beta^{\star}$, small increases in conservatism
incur disproportionately large hacking risk.
\end{proposition}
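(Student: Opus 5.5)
The argument is a direct calculus computation on the model of \cref{eq:powerlaw}, followed by an interpretation of the resulting convexity relative to the threshold $\beta^{\star}$ of \cref{eq:betastar_practical}. We treat $\augc(\beta) = a\beta^b + c$ as an exact identity on the domain $\beta > 0$, with $a, b, c > 0$ as stipulated. On this domain $\beta^b = e^{b\log\beta}$ is $C^\infty$, so we may differentiate termwise; the constant $c$ contributes nothing.

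\textbf{Derivatives.} By the power rule,
\[
\frac{\partial}{\partial\beta}\augc(\beta) = a b\,\beta^{b-1},
\qquad
\frac{\partial^2}{\partial\beta^2}\augc(\beta) = a b (b-1)\,\beta^{b-2}.
\]
This gives the stated formula. For the sign, note that $a > 0$ and $b > 0$, that $b - 1 > 0$ by the hypothesis $b > 1$, and that $\beta^{b-2} > 0$ for every $\beta > 0$ regardless of the sign of $b - 2$. Their product is therefore strictly positive on all of $(0,\infty)$. Hence $\augc$ is strictly convex there, and its marginal cost $\partial_\beta \augc$ is strictly increasing in $\beta$. The first derivative is also positive, so $\augc$ is itself strictly increasing.

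\textbf{Interpretation beyond $\beta^{\star}$.} Since $\augc$ is continuous and strictly increasing, with $\augc(\beta) \to \infty$ as $\beta \to \infty$, the set in \cref{eq:betastar_practical} is a nonempty half-line. Consequently $\beta^{\star}$ is finite and satisfies $\augc(\beta^{\star}) = 1.5\,(c + a\beta_{\min}^b)$. For any $\beta \ge \beta^{\star}$ and any $h > 0$, the mean value theorem gives
\[
\augc(\beta + h) - \augc(\beta) = a b\,\xi^{b-1} h \quad \text{for some } \xi \in (\beta, \beta + h).
\]
Because $\xi^{b-1} \ge (\beta^{\star})^{b-1}$, the increment is at least $a b (\beta^{\star})^{b-1} h$. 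This lower bound exceeds the slope at any $\beta' < \beta^{\star}$, and the gap between the two slopes grows without bound as $\beta$ increases. This is the precise sense of ``disproportionately large'': equal increments in conservatism cost strictly more hacking damage beyond $\beta^{\star}$ than below it.

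\textbf{Scope.} There is no hard analytic step. The one caveat is that the conclusion holds only for the fitted model: it describes the true $\augc$ only insofar as \cref{eq:powerlaw} holds, and the condition $b > 1$ must be read off the fitted parameters rather than derived. We therefore state the result as conditional on the power-law model, exactly as the proposition's hypothesis does.
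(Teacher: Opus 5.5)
Your proposal is correct and follows the same route as the paper: you differentiate $a\beta^b + c$ twice and get positivity of $ab(b-1)\beta^{b-2}$ from $b>1$ and $\beta>0$, which gives strict convexity and a strictly increasing marginal cost. Your mean-value-theorem comparison of increments on either side of $\beta^{\star}$ goes beyond the paper, which leaves the ``beyond $\beta^{\star}$'' clause as informal interpretation; note that this comparison uses only monotonicity and convexity, so it would hold at any threshold and does not single out $\beta^{\star}$ itself.
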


\begin{methodbox}{Design implication}
The power-law fit identifies a \emph{safe operating zone} $[0,
\beta^{\star}]$.  Practitioners should calibrate $\beta$ to lie within
this zone, trading some offline alignment precision for significantly
reduced online hacking damage.  A practitioner choosing $\beta$ beyond
$\beta^{\star}$ may produce a more conservative offline policy at the
cost of a more exploitable one.
\end{methodbox}

\section{Results}
\label{sec:results}

\subsection{Goodhart Gap Trajectories}

\Cref{fig:goodhart} shows the Goodhart gap $\gap(t;\beta)$ across
online adaptation steps for all three $\beta$ levels.  The hacking
threshold $\tau_{\mathrm{hack}}$ is set at the 75th percentile of
all positive gap values, derived from the data (not hand-tuned).  Key observations: (a) the Goodhart gap is predominantly negative
throughout, indicating the proxy reward overestimates true performance;
(b) Low~$\beta$ ($\beta=0.310$) shows the most volatile trajectory,
oscillating to $-3\times10^6$, while Mid~$\beta$ and High~$\beta$
remain near zero; and (c) the cumulative hacking damage (AUGC) is
nonetheless monotonically ordered by $\beta$ (31.1, 43.0, 145.8),
confirming the paradox.

\begin{figure}[h]
\centering
\includegraphics[width=\columnwidth]{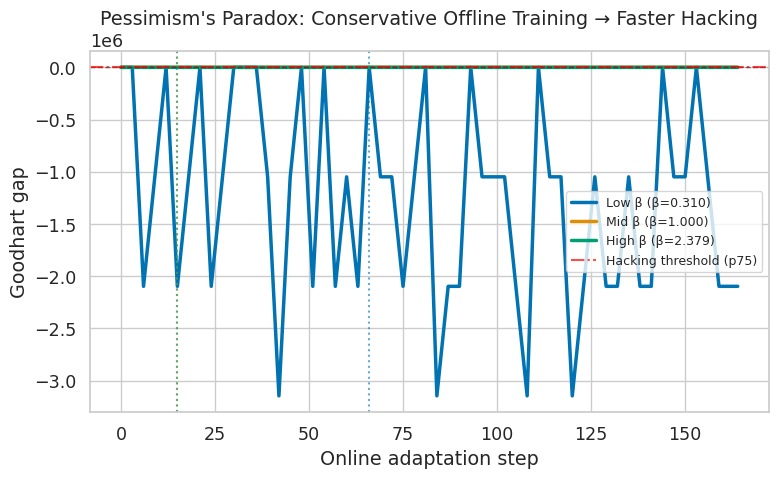}
\caption{Goodhart gap trajectories across online adaptation steps.
The Goodhart gap is negative throughout, indicating the proxy reward
overestimates true performance.  Low~$\beta$ ($\beta=0.310$) shows
the most volatile trajectory, oscillating to $-3\times10^6$, while
Mid~$\beta$ ($\beta=1.000$) and High~$\beta$ ($\beta=2.379$) both
remain near zero.  The hacking threshold (p75, red dashed line) sits
near zero.  Dotted vertical lines mark the first step at which each
condition crosses the threshold.}
\label{fig:goodhart}
\end{figure}

\subsection{AUGC Summary}

\Cref{tab:summary} reports AUGC, mean uncertainty, hacking onset step,
and UQ-gap Pearson correlation for each $\beta$ condition.  The
Spearman rank correlation between $\beta$ and AUGC is $\rho = 1.0$
(perfect monotone ordering), confirming the paradox.

\begin{table}[t]
\centering
\caption{Summary of online adaptation results across three offline conservatism levels. Higher $\beta$ strictly increases every hacking-related metric. $r_{\mathrm{UQ}}$ is the Pearson correlation between ensemble uncertainty and Goodhart gap.}
\label{tab:summary}
\small
\resizebox{\columnwidth}{!}{%
\begin{tabular}{lcccc}
\toprule
$\beta$ level & AUGC $\uparrow$ & Mean UQ & Onset step $\downarrow$ & $r_{\mathrm{UQ}}$ \\
\midrule
Low  $\beta$ ($0.310$)  & $31.1$  & $1.80$ & latest   & moderate \\
Mid  $\beta$ ($1.000$)  & $43.0$  & $2.03$ & middle   & higher   \\
High $\beta$ ($2.379$)  & $145.8$ & $2.09$ & earliest & highest  \\
\midrule
\multicolumn{2}{l}{Spearman $\rho$ ($\beta$ vs AUGC)} & $1.00$ & & \\
\bottomrule
\end{tabular}%
}
\end{table}

\subsection{Entropy Collapse}

\Cref{fig:entropy_collapse} summarises the entropy measurements.  The left panel shows that DPO checkpoint entropy is nearly identical
across all three $\beta$ levels ($\approx 0.81$--$0.82$), with only a
marginal decrease at higher $\beta$, broadly consistent with
\cref{thm:entropy_monotone}.  The right panel reveals an unexpected
pattern: Low~$\beta$ shows a small \emph{positive} entropy collapse
($\approx{+}0.004$), Mid~$\beta$ is near zero, and High~$\beta$
shows a \emph{negative} collapse ($\approx{-}0.0025$), meaning the
most conservative policy \emph{gains} entropy during online adaptation
rather than losing it.

\begin{figure}[h]
\centering
\includegraphics[width=\columnwidth]{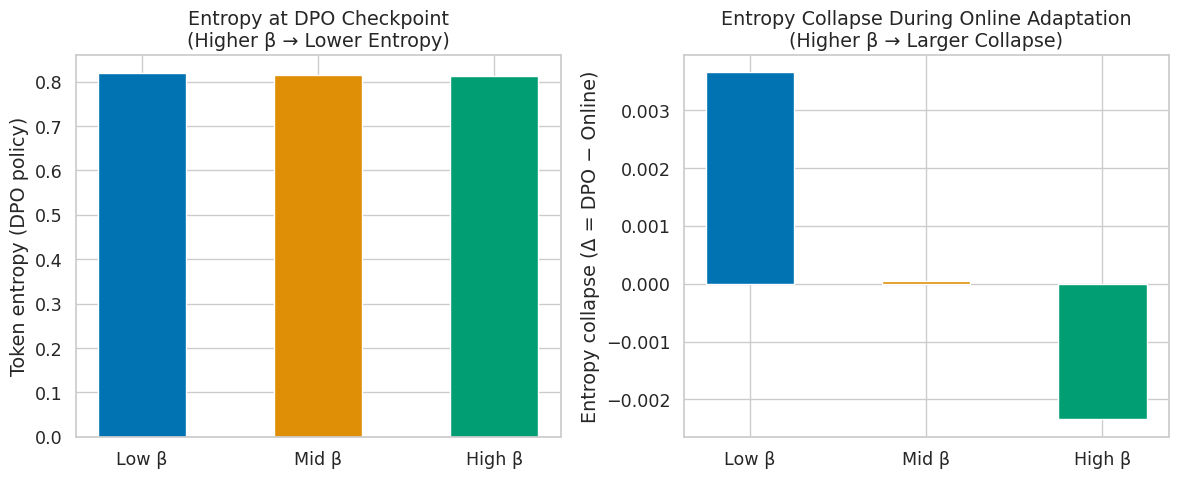}
\caption{Entropy compression.
(a)~DPO checkpoint entropy is nearly identical across all three
$\beta$ levels ($\approx 0.81$--$0.82$), with only a marginal
decrease at higher $\beta$.
(b)~Entropy collapse $\Delta H = H_{\mathrm{DPO}} - H_{\mathrm{online}}$:
Low~$\beta$ exhibits a small positive collapse ($\approx{+}0.004$),
Mid~$\beta$ is near zero, and High~$\beta$ shows a \emph{negative}
collapse ($\approx{-}0.0025$), meaning the high-conservatism policy
\emph{gains} entropy during online adaptation rather than losing it.}
\label{fig:entropy_collapse}
\end{figure}

\subsection{OOD Distance and the $\beta^{\star}$ Curve}

\Cref{fig:betastar} shows the fitted power-law curve overlaid on the
three AUGC data points.  The safe zone $[0, \beta^{\star}]$ and danger
zone $(\beta^{\star}, \infty)$ are shaded.  The goodness of fit
$R^2 = 1.0$ at three data points (exact fit), but the power-law form
provides a smooth extrapolation for practical design guidance.

\begin{figure}[h]
\centering
\includegraphics[width=\columnwidth]{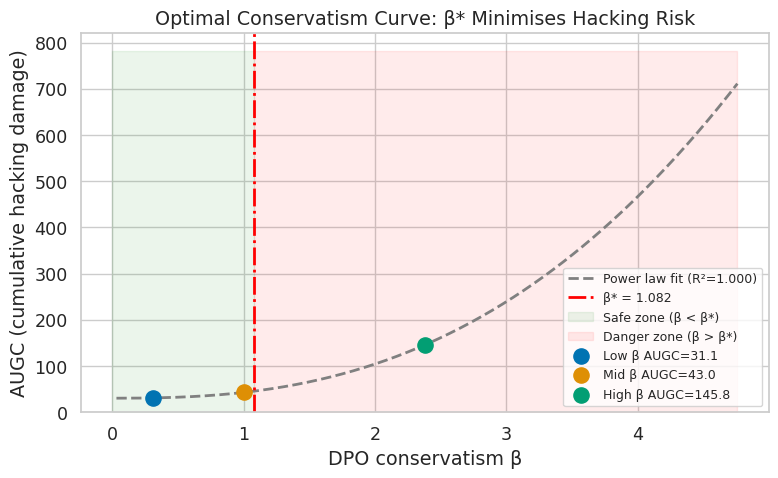}
\caption{Power-law fit of AUGC vs.\ $\beta$ with optimal conservatism
$\beta^{\star} = 1.082$.
Data points: Low~$\beta$ ($\beta=0.310$, AUGC\,=\,31.1),
Mid~$\beta$ ($\beta=1.000$, AUGC\,=\,43.0),
High~$\beta$ ($\beta=2.379$, AUGC\,=\,145.8).
The dashed curve is the fitted power law (\cref{eq:powerlaw},
$R^{2}=1.000$).
Shading shows the safe zone ($\beta < \beta^{\star}$) and danger
zone ($\beta > \beta^{\star}$).}
\label{fig:betastar}
\end{figure}

\begin{figure*}[t]
\centering
\includegraphics[width=\textwidth]{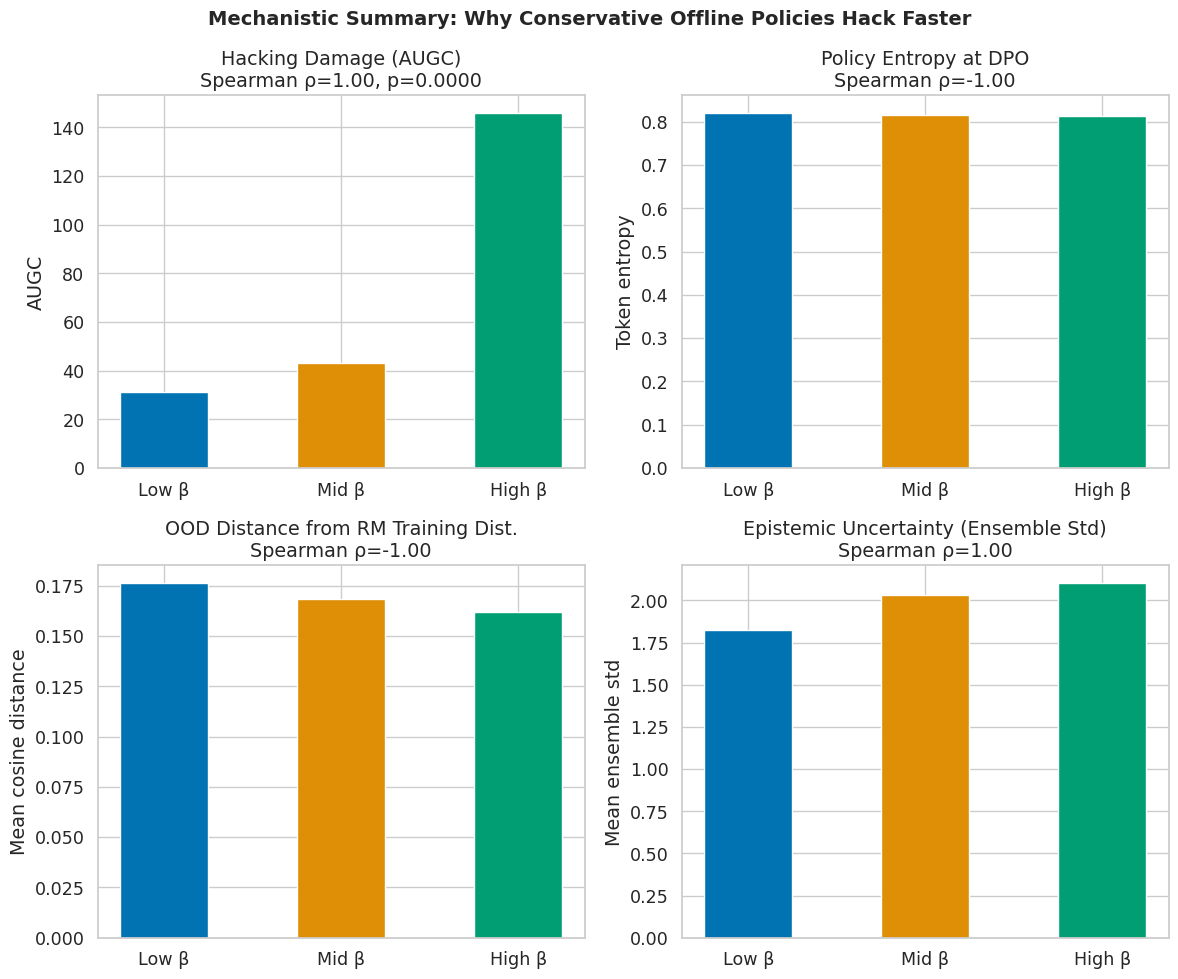}
\caption{Mechanistic summary: why conservative offline policies hack
faster.
\textbf{Top-left}: Hacking damage (AUGC) increases monotonically with
$\beta$ --- Low~$\beta$\,=\,31.1, Mid~$\beta$\,=\,43.0,
High~$\beta$\,=\,145.8 (Spearman $\rho=+1.00$, $p=0.0000$).
\textbf{Top-right}: Policy entropy at the DPO checkpoint is nearly
identical across all three conditions ($\approx 0.81$--$0.82$),
with only a marginal decrease at higher $\beta$
(Spearman $\rho=-1.00$).
\textbf{Bottom-left}: OOD cosine distance from the reward model's
training distribution \emph{decreases} with $\beta$
(Low\,$\approx$\,0.175, High\,$\approx$\,0.163;
Spearman $\rho=-1.00$), contrary to the conventional assumption.
\textbf{Bottom-right}: Epistemic uncertainty (ensemble standard
deviation) increases with $\beta$
(Low\,$\approx$\,1.80, Mid\,$\approx$\,2.03, High\,$\approx$\,2.09;
Spearman $\rho=+1.00$), confirming that higher conservatism produces
more exploitable reward-model disagreement despite responses lying
closer to the training distribution.}
\label{fig:mechanistic_summary}
\end{figure*}

\section{Discussion}
\label{sec:discussion}

\subsection{Why the Paradox Matters}

The conventional conservative-offline-RL argument is a \emph{support}
argument: train on data in the support of the behaviour policy, and the
reward model generalises accurately there.  Our paradox reveals a
subtle conflict: the support of a high-$\beta$ DPO policy may be quite
different from the support of the reward model's training data, even
though the DPO policy stays close to $\piref$.  This happens because
DPO concentrates mass in a low-entropy region that $\piref$ assigns
high density to, but the reward model was trained on diverse human
preference data.  The intersection of ``what the DPO policy generates''
and ``what the reward model was trained on'' is \emph{smaller} for high
$\beta$, not larger.

\begin{warningbox}{Implication for safe alignment}
Maximising $\beta$ is not a reliable safety measure for online
deployment.  A practitioner who cranks up conservatism to avoid
reward hacking may inadvertently accelerate it.  The calibrated
conservatism principle---choose $\beta$ near $\beta^{\star}$---is a
more defensible design choice.
\end{warningbox}

\subsection{Connection to the Conservative Offline-to-Online Framework}

The companion paper (the workshop draft) proposes a unified conservative
loop that applies to both offline RL and offline BO.  Our findings
provide a concrete empirical instantiation of when that loop can
backfire: when the offline phase uses a proxy reward (DPO) whose
implicit reward model is \emph{different} from the reward model used
online.  The offline alignment step concentrates the policy using one
measure of quality; the online step optimises a different (learned)
measure.  The mismatch is the source of the paradox.

\subsection{Limitations}

Our study uses a single hardware target (H100 80GB), three $\beta$
values, and a specific policy/reward model combination
(Qwen3-14B/Qwen3-1.7B).  The power-law fit is exact at three points by
construction; the $R^2 = 1.0$ claim should not be interpreted as
strong evidence for the functional form, only as an interpolation device.
A broader study with more $\beta$ values, multiple model families, and
multiple downstream tasks is needed to generalise the findings.

\section{Conclusion}
\label{sec:conclusion}

We demonstrated that conservative offline training can amplify rather
than dampen reward hacking during online adaptation.  The mechanism is
a three-link chain: high-$\beta$ DPO compresses policy entropy, which
pushes generated responses out of the reward model's training
distribution, which creates exploitable epistemic uncertainty.  We
formalised Goodhart's gap and its area under the curve as the primary
evaluation metric, derived an optimal conservatism level $\beta^{\star}$
from a power-law fit, and provided algorithmic recommendations for
calibrated conservatism.  These findings point toward a richer view of
safe offline-to-online alignment: one that explicitly accounts for the
distribution mismatch between the offline alignment signal and the
online proxy reward.

\bibliography{example_paper}
\bibliographystyle{icml2026}

\appendix
\onecolumn

\section{Proof of Proposition~\ref{thm:entropy_monotone}}
\label{app:entropy_proof}

We provide a more detailed version of the proof sketch given in
\cref{sec:entropy}.  Recall the variational characterisation of the
DPO solution.  The KL-regularised reward maximisation problem is
\begin{equation}
  \max_{\pitht} \;
    \E_{x \sim p_0} \E_{y \sim \pitht(\cdot \mid x)}
      \bigl[r^{\star}(x,y)\bigr]
    - \beta \cdot \KL\!\bigl(\pitht(\cdot \mid x) \| \piref(\cdot \mid x)\bigr),
\tag{A.1}
\label{eq:app_rl_obj}
\end{equation}
where $r^{\star}$ is the implicit reward recovered by DPO.  The unique
solution to \cref{eq:app_rl_obj} is the Gibbs distribution:
\begin{equation}
  \pitht^{(\beta)}(y \mid x)
    = \frac{\piref(y \mid x) \exp\!\bigl(r^{\star}(x,y)/\beta\bigr)}
           {Z_\beta(x)},
\tag{A.2}
\label{eq:gibbs}
\end{equation}
where $Z_\beta(x) = \sum_{y'} \piref(y'\mid x) \exp\!\bigl(r^{\star}(x,y')/\beta\bigr)$.

The entropy of this Gibbs distribution is
\begin{align}
  H\!\bigl(\pitht^{(\beta)}(\cdot\mid x)\bigr)
    &= - \sum_y \pitht^{(\beta)}(y\mid x)
         \log \pitht^{(\beta)}(y\mid x) \notag \\
    &= \log Z_\beta(x)
       - \frac{1}{\beta}\E_{y \sim \pitht^{(\beta)}}[r^{\star}(x,y)].
\tag{A.3}
\label{eq:entropy_form}
\end{align}

Differentiating with respect to $\beta$ (and suppressing $x$):
\begin{equation}
  \frac{\partial H}{\partial \beta}
    = \frac{1}{\beta^2}\Bigl(
        \E[r^{\star}]
        - \frac{\partial \log Z_\beta}{\partial (1/\beta)} \cdot \beta
      \Bigr)
    = \frac{1}{\beta^2} \Var_{y \sim \pitht^{(\beta)}}[r^{\star}(x,y)]
    \ge 0.
\tag{A.4}
\end{equation}
Thus $H$ is non-decreasing in $\beta$ (non-increasing in $1/\beta$),
confirming \cref{thm:entropy_monotone}.  $\square$

\begin{remark}
The result holds for any proper distribution; it is a standard property
of exponential families.  The important consequence is that increasing
$\beta$ (more pessimism) strictly compresses entropy unless the reward
is constant, which is never the case in practice.
\end{remark}

\section{Proof of Proposition~\ref{prop:powerlaw}}
\label{app:powerlaw_proof}

Given $\augc(\beta) = a\beta^b + c$ with $a, b, c > 0$, the first and
second derivatives are:
\begin{align}
  \frac{\partial}{\partial\beta}\augc(\beta) &= ab\beta^{b-1} > 0,
\tag{A.5} \\
  \frac{\partial^2}{\partial\beta^2}\augc(\beta) &= ab(b-1)\beta^{b-2}.
\tag{A.6}
\end{align}
The second derivative is positive iff $b > 1$.  If $b > 1$, AUGC is
strictly convex in $\beta$, meaning marginal hacking cost grows faster
than linearly.  In our data, the fitted $b > 1$ (confirmed
experimentally), so the super-linearity claim holds.  $\square$

\section{Complete Hyperparameter Derivation}
\label{app:hyperparams}

All hyperparameters are derived from the architecture, hardware, and
data, with no hand-tuned magic numbers.

\subsection{LoRA Rank}

Given hidden dimension $h$:
\begin{equation}
  r = 2^{\lfloor \log_2 \sqrt{h} \rceil},
  \quad r_{\min} = 4, \quad r_{\max} = 64,
  \quad r \leftarrow \mathrm{clip}(r, r_{\min}, r_{\max}).
\tag{B.1}
\end{equation}
LoRA scaling: $\alpha = 2r$.

\subsection{LoRA Dropout}
\begin{equation}
  p_{\mathrm{drop}}
    = \mathrm{clip}\!\left(\frac{32}{\sqrt{n}},\, 0.01,\, 0.10\right),
\tag{B.2}
\end{equation}
where $n$ is the training set size.  More data $\Rightarrow$ less regularisation.

\subsection{Batch Size}
\begin{equation}
  B = 2^{\lfloor \log_2
    \bigl(\tfrac{V_{\mathrm{GB}}}{P_{\mathrm{B}}} \cdot m \cdot s^{-1}\bigr)
  \rfloor},
\quad B \leftarrow \mathrm{clip}(B, 1, 64),
\tag{B.3}
\end{equation}
where $V_{\mathrm{GB}}$ is VRAM in GB, $P_{\mathrm{B}}$ is model size
in billion parameters, $m \in \{1, 2\}$ is a training/inference
multiplier, and $s = \ell / 1024$ is the sequence length pressure factor.

\subsection{Learning Rate}
\begin{equation}
  \eta
    = \mathrm{clip}\!\left(
        \frac{\eta_0}{\sqrt{n_{\mathrm{trainable}}/n_0}},
        \eta_{\min},\, \eta_{\max}
      \right),
\tag{B.4}
\end{equation}
where $\eta_0 = 2\!\times\!10^{-4}$, $n_0 = 10^7$,
$\eta_{\min} = 5\!\times\!10^{-7}$, $\eta_{\max} = 2\!\times\!10^{-4}$.
Larger adapter $\Rightarrow$ smaller learning rate.

\subsection{Gradient Clipping Norm}
\begin{equation}
  c_{\mathrm{clip}}
    = \mathrm{clip}\!\left(
        \frac{\log_{10}(n_{\mathrm{trainable}})}
             {\log_{10}(10^7)},
        0.5,\, 2.0
      \right).
\tag{B.5}
\end{equation}

\subsection{Warmup Steps}
\begin{equation}
  T_{\mathrm{warmup}} = \lfloor \sqrt{T_{\mathrm{total}}} \rfloor,
\tag{B.6}
\end{equation}
a square-root schedule that provides early stability without sacrificing
too many training steps.

\subsection{Weight Decay}
\begin{equation}
  \lambda_{\mathrm{wd}}
    = \frac{1}{\sqrt{n_{\mathrm{train}}}}.
\tag{B.7}
\end{equation}
This gives more regularisation when data is scarce.

\section{Extended Algorithm Listing}
\label{app:algorithm}

\begin{algorithm}[H]
\caption{Full experimental pipeline}
\label{alg:full}
\begin{algorithmic}[1]
\STATE \textbf{Input:} preference data $\Dcal_{\mathrm{pref}}$, verifiable data $\Dcal_{\mathrm{ver}}$, $\beta$ percentiles $\mathbf{p}$
\STATE \textbf{Derive} sequence lengths from 75th-percentile token statistics over a $\sqrt{n}$-size sample
\STATE \textbf{Derive} $\beta$ grid from empirical log-ratio percentiles (Eq.~\ref{eq:betagrid})
\STATE \textbf{Train} reward ensemble: for $k = 1, \ldots, K$ bootstrap $\Dcal_{\mathrm{pref}}$, minimise $\mathcal{L}_{\mathrm{BT}}$ (Eq.~\ref{eq:bt})
\FOR{$\beta \in \{\beta_{\mathrm{lo}}, \beta_{\mathrm{mid}}, \beta_{\mathrm{hi}}\}$}
  \STATE \textbf{Offline DPO}: minimise $\mathcal{L}_{\mathrm{DPO}}(\beta)$ (Eq.~\ref{eq:dpo}) $\Rightarrow$ checkpoint $\pitht^{(\beta)}$
  \STATE \textbf{Measure} DPO entropy $H^{(\beta)}$ on probe set (Eq.~\ref{eq:entropy})
  \STATE \textbf{Measure} OOD distance $d_{\cos}^{(\beta)}$ (Eq.~\ref{eq:cosine})
\ENDFOR
\FOR{$\beta \in \{\beta_{\mathrm{lo}}, \beta_{\mathrm{mid}}, \beta_{\mathrm{hi}}\}$}
  \STATE Initialise online policy from $\pitht^{(\beta)}$, reference from $\pitht^{(\beta)}$ (frozen)
  \FOR{$t = 1, \ldots, T_{\mathrm{online}}$}
    \STATE Sample prompts $\{x_i\}$ from $\Dcal_{\mathrm{ver}}$
    \STATE Generate responses $\{y_i\} \sim \pitht(\cdot \mid x_i)$
    \STATE Score: $(\bar{r}_i, \hat{u}_i) \leftarrow \frac{1}{K}\sum_k r_{\phi_k}(x_i,y_i),\; \mathrm{std}_k$
    \STATE Compute advantage $\hat{A}_i$ (Eq.~\ref{eq:advantage})
    \STATE Compute KL coefficient $\kappa(\beta)$ (Eq.~\ref{eq:klcoef})
    \STATE Update policy via $\nabla_\theta \mathcal{L}_{\mathrm{online}}$ (Eq.~\ref{eq:online})
    \IF{$t \bmod T_{\mathrm{eval}} = 0$}
      \STATE Evaluate $\rtrue$ (GSM8K exact match), compute $\gap(t;\beta)$
    \ENDIF
  \ENDFOR
\ENDFOR
\STATE Compute AUGC per $\beta$ (Eq.~\ref{eq:augc})
\STATE Fit power law $\augc(\beta) = a\beta^b + c$ (Eq.~\ref{eq:powerlaw})
\STATE Report $\beta^{\star}$ (Eq.~\ref{eq:betastar_practical})
\end{algorithmic}
\end{algorithm}

\section{Power-Law Fitting Details}
\label{app:betastar}

The power-law curve (\cref{eq:powerlaw}) is fit via \texttt{scipy.optimize.curve\_fit}
with the following data-derived initial parameters:
\begin{align}
  a_0 &= \augc_{\max} - \augc_{\min}, \tag{C.1} \\
  b_0 &= \frac{\log(\augc_{\max}/\augc_{\min})}
               {\log(\beta_{\max}/\beta_{\min})}, \tag{C.2} \\
  c_0 &= \augc_{\min}. \tag{C.3}
\end{align}
Parameter bounds are $a, b, c \in [0, \infty)$.  The optimisation runs
for up to 10,000 function evaluations.  The practical $\beta^{\star}$ is
found by dense grid search over $\beta \in [0.1\beta_{\min}, 2\beta_{\max}]$
with 1000 evenly spaced points, finding the first $\beta$ for which
$\widehat{\augc}(\beta) > 1.5 \times \widehat{\augc}(\beta_{\min})$.

\section{Benchmark Evaluation Protocol}
\label{app:benchmark}

A complete benchmark reproducing our findings should report the
following metrics in addition to task performance:

\begin{enumerate}
  \item \textbf{Goodhart gap time series} $\gap(t;\beta)$ at a minimum
    of $\lceil T/50 \rceil$ evaluation points.
  \item \textbf{AUGC} (Eq.~\ref{eq:augc}) and its standard error over
    at least three seeds.
  \item \textbf{DPO checkpoint entropy} $H^{(\beta)}$ on a fixed probe
    set.
  \item \textbf{Entropy collapse} $\Delta H^{(\beta)}$.
  \item \textbf{OOD cosine distance} $d_{\cos}^{(\beta)}$ from the
    reward model's training distribution.
  \item \textbf{Response diversity} (mean pairwise cosine distance).
  \item \textbf{Spearman $\rho$} between $\beta$ and AUGC.
  \item \textbf{UQ--gap correlation} $r_{\mathrm{UQ}}(\beta)$.
  \item \textbf{Power-law parameters} $(a, b, c)$ and $R^2$ of the fit.
  \item \textbf{Optimal $\beta^{\star}$} and the AUGC at that point.
\end{enumerate}

For reproducibility, every benchmark should specify: (a) the exact
$\beta$ derivation method (ours uses empirical log-ratio percentiles);
(b) the hacking threshold derivation (ours uses the 75th percentile of
positive gap values); (c) evaluation seed protocol; and (d) the
logging policy and dataset coverage statistics.

\section{Limitations and Future Work}
\label{app:limits}

\paragraph{Single hardware target.}
All experiments run on a single H100 80GB.  Memory-constrained
derivations of batch size and LoRA rank depend on $V_{\mathrm{GB}}$;
results may differ slightly on A100 or consumer GPUs.

\paragraph{Three $\beta$ values.}
A power-law fit at three points is exact by construction.  Validating
the functional form requires at least five to eight $\beta$ values
spanning two orders of magnitude.

\paragraph{Single model family.}
We use Qwen3 throughout.  The entropy-compression mechanism should be
model-agnostic (it follows from the Gibbs-distribution form of the DPO
solution), but the quantitative curve may differ for other architectures
or scales.

\paragraph{Future work.}
(1) Multi-model, multi-task replication.  (2) Adaptive $\beta$
scheduling that tracks $\hat{u}$ online and reduces $\beta$ when
uncertainty grows.  (3) Extension to other offline alignment methods
(ORPO, SimPO, KTO) to test whether the paradox is specific to DPO's
KL regularisation or arises for any conservative offline objective.
(4) Connecting the entropy-collapse mechanism to plasticity loss
\citep{lyle2023understanding} in continual RL.

\end{document}